\documentclass{article}	

    \PassOptionsToPackage{numbers, compress}{natbib}

\usepackage[preprint]{neurips_2021}




\usepackage[utf8]{inputenc} 
\usepackage[T1]{fontenc}    
\usepackage{hyperref}       
\usepackage{url}            
\usepackage{booktabs}       
\usepackage{amsfonts}       
\usepackage{amsmath}
\usepackage{amssymb}
\usepackage{amsthm}
\usepackage{graphicx}
\usepackage{txfonts}
\usepackage{wrapfig}
\usepackage{pifont}
\usepackage{lipsum}
\usepackage{nicefrac}       
\usepackage{microtype}      
\usepackage{xcolor}         
\usepackage[ruled,linesnumbered,vlined]{algorithm2e}
\usepackage{algorithmic}
\usepackage{multirow}
\usepackage{siunitx}
\usepackage{bbding}

\usepackage{float}
\newtheorem{lemma}{Lemma}
\newtheorem{theorem}{Theorem}
\newtheorem{definition}{Definition}
\urlstyle{same}
\usepackage{array}
\usepackage{diagbox}
\newcolumntype{L}[1]{>{\raggedright\let\newline\\\arraybackslash\hspace{0pt}}m{#1}}
\newcolumntype{C}[1]{>{\centering\let\newline\\\arraybackslash\hspace{0pt}}m{#1}}
\newcolumntype{R}[1]{>{\raggedleft\let\newline\\\arraybackslash\hspace{0pt}}m{#1}}
\SetKwInput{KwInput}{Input}                
\SetKwInput{KwOutput}{Output}              

\title{Learning on Abstract Domains: A New Approach for Verifiable Guarantee in Reinforcement Learning}

%

\author{%
	Peng Jin, Min Zhang, Jianwen Li, Li Han\\
	Shanghai Key Laboratory of Trustworthy Computing\\
	East China Normal University\\
	\texttt{zhangmin@sei.ecnu.edu.cn} \\
	 \And
	 Xuejun Wen\\
	 Huawei Singapore Research Center\\
	 Huawei Technologies Co., Ltd.\\
	 \texttt{wen.xuejun@huawei.com}
}

\begin{document}
	
	\maketitle
	
	\begin{abstract}
		
		Formally verifying Deep Reinforcement Learning (DRL) systems is a challenging task due to the dynamic continuity of system behaviors and the black-box feature of embedded neural networks. In this paper, we propose a novel abstraction-based approach to train DRL systems on finite abstract domains instead of concrete system states. It yields neural networks whose input states are finite, making hosting DRL systems directly verifiable using model checking techniques. 
		Our approach is orthogonal to existing DRL algorithms and off-the-shelf model checkers. We implement a resulting prototype training and verification framework and conduct extensive experiments on the state-of-the-art benchmark. The results show that the systems trained in our approach can be verified more efficiently while they retain comparable performance against those that are trained without abstraction.

	\end{abstract}
	
	\section{Introduction}
	Despite the unparalleled potential that Deep Reinforcement Learning (DRL) techniques have exposed in plentiful control fields~\cite{DBLP:journals/nature/MnihKSRVBGRFOPB15,DBLP:journals/corr/abs-1810-03259,DBLP:journals/ral/LambertDYLCP19}, real-world DRL applications are quite limited in safety-critical domains because they need certificates for their reliability.
	A typical example is the fully autonomous driving, which is still argued a long way off due to safety concerns~\cite{gomes2016will}. Verifiable guarantees on safety and reliability are both desirable and necessary to those DRL systems~\cite{hasanbeig2020towards}. Unfortunately, formally verifying DRL systems is a challenging task due to the dynamic continuity of system behaviors and the black-box feature of the AI models (neural networks) embedded in the systems.   
	The dynamic continuity results in uncountably-infinite state space~\cite{DBLP:conf/pldi/ZhuXMJ19}, while the black-box feature causes unexplainability of neural networks~\cite{DBLP:journals/csr/HuangKRSSTWY20}. 

	Instead of directly verifying DRL systems, most of the existing approaches rely on  
	transforming them into verifiable models. Representative works include 
	exacting decision trees \cite{bastani2018verifiable} and 
	programmatic policies \cite{DBLP:conf/icml/VermaMSKC18},  
	synthesizing deterministic programs \cite{DBLP:conf/pldi/ZhuXMJ19} and linear controllers \cite{xiong2021scalable}, transforming into hybrid systems \cite{DBLP:conf/hybrid/IvanovWAPL19} and star sets \cite{tran2019safety}. 
	Although these transformation-based approaches are effective solutions, there are some limitations, e.g., extracted policies may not equivalently represent source neural networks 
	and the properties that can be verified may be limited. 
	For instance, only safety properties are supported by hybrid system and star set models by reachability analysis. Thus, it is desired that a trained DRL system can be directly and efficiently verified without transformation.

	In this paper, we propose a novel training approach for DRL by learning on finite abstract domains, unlike the traditional approaches which learn directly on concrete system states. 
	Specifically, we discretize continuous states into finite abstract states, on which we train a DRL system. For the finiteness of abstract states, the neural network trained on them is essentially a finite function that maps from  abstract states to  actions. Because the trained neural network adopts the same action for the concrete states of the same abstract state, we can leverage the abstract interpretation technique \cite{cousot1977abstract} to model the DRL system as a finite-state transition system, which can be efficiently model checked. 

	Our training approach has two main features that distinguish itself from existing classic DRL approaches.  
	Firstly, a DRL system trained in our approach is directly verifiable, and thus it avoids any shortage of  transformation-based approaches. The novelty of learning on abstract domains makes it possible to 
	model a DRL system into a finite-state system by abstracting continuous concrete states into corresponding abstract domains. The following verification becomes straightforward as a bunch of off-the-shelf model-checking tools such as Spot~\cite{duret2016spot} can be used to verify various properties efficiently. Secondly, our approach is orthogonal to existing DRL algorithms and can be naturally implemented by extending them. 
	We have implemented a resulting prototype training and verification framework, and performed extensive  experiments on four classic continuous control tasks. The experimental results demonstrate that the systems trained in our approach have comparable performance with those trained by existing DRL algorithms. Meanwhile, they can be formally verified against desired properties.  
	
	In summary, this paper makes the following two major contributions:
	\begin{enumerate}
		\item A novel abstraction-based DRL approach to train continuous control systems on abstract domains such that the trained systems are amenable to formal verification while retaining comparable performance to those trained on concrete states.
		\item A subsequent abstraction-based verification approach and a resulting prototype tool for model checking the trained DRL systems, coupled with a benchmark of four verified DRL systems for corresponding classic control problems. 
	\end{enumerate}
	
	%

	
	\section{DRL and its Formal Verification}
	\label{sec:motivation}
	DRL is usually modeled as a Markov Decision Process (MDP)~\cite{feinberg2012handbook}, which is a 4-tuple $(S, A, T_a, R_a)$,  where $S$ is a set of states called the state space, $A$ is a set of actions called the action space, $T_a(s, s{'})$ is the probability of the transition from  $s$ to $s{'}$ based on action $a$, and  $R_a(s, s{'})$ is the reward received by the controller after the given transition from $s$ to $s'$. Since the system dynamics of safety-critical systems are generally known and deterministic~\cite{bastani2018verifiable,DBLP:conf/hybrid/IvanovWAPL19,DBLP:conf/pldi/ZhuXMJ19}, it implies that the effect of an action $a$ on a state results in only one successor state. Thus, we write $(s,s')\in T_a$ to indicate that there is a transition from $s$ to $s'$ due to action $a$. 
	
	DRL aims to train a DNN-based controller to learn a deterministic policy $\pi: S \rightarrow A$ that specifies a unique action adopted in a state to achieve specific goals. A trained DRL system can be represented as a tuple $M=(S,A,T_a,\pi, S^{0})$ with $S^{0}$ being a set of initial states of the system. Let $RS$ be the set of all the reachable states of $M$. We have $S^{0} \subseteq RS$, and for two states $s,s'\in S$, if $s\in RS$ and $(s,s')\in T_{\pi(s)}$ then there is $s'\in RS$.  
	
	\begin{wrapfigure}{r}{7.5cm}
		\vspace{-5mm}
		\setlength{\textfloatsep}{5pt}
		\centering
		\includegraphics[width=0.5\columnwidth]{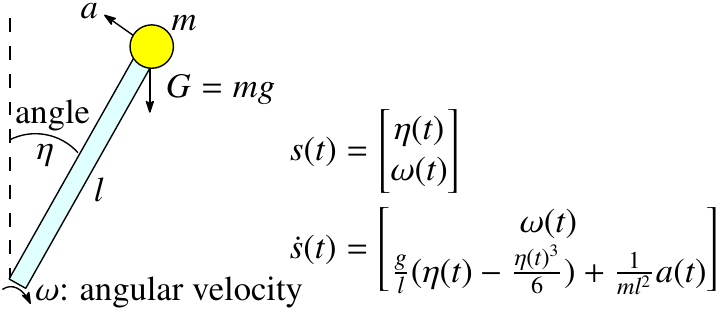}
		\vspace{-1mm}
		\caption{State transition of Inverted Pendulum \cite{DBLP:conf/pldi/ZhuXMJ19}}
		\label{fig:invert}
		\vspace{-3mm}
	\end{wrapfigure}
	
	The formal verification of a DRL system $M$ is to check whether $M$ satisfies some desired properties that are formalized as logical formulas $\phi$ in some logic such as Linear Temporal Logic (LTL)~\cite{pnueli1977temporal}. $M$ satisfies $\phi$, denoted as $M\models \phi$, if and only if all the paths of $M$ satisfy $\phi$. There are two key factors make it intractable to directly verify $M\models \phi$. One is that the number of paths of $M$ is infinite when $S^{0}$ contains infinite states. The other is that the set of successor states is difficult to compute and represent due to the non-linearity of the system dynamics. Figure~\ref{fig:invert} shows an example of computing successor state of the state $s(t)$ using the change of rate $\dot{s}$, where time is discretized into time interval $t$ and the transition from time $kt$ to $(k+1)t$ 
	is approximated by the equation   $s(kt+t)=s(kt)+\dot{s}(kt)\times t$ \cite{DBLP:conf/pldi/ZhuXMJ19}. Further, 
	it needs to compute the control action $a$ by  querying DNN in every transition in order to build the state transition system of a DRL system, which drastically reduces the efficiency of verification.

	Perturbation is another factor making the verification problem of DRL systems more difficult. A trained controller may face perturbations in the real world or caused by modeling errors and differences in training and test scenarios ~\cite{DBLP:conf/icml/TesslerEM19,zhang2020robust}. 
	It is necessary to ensure the robustness of DRL system, so perturbations must be taken into account to verify system robustness. Perturbations may cause nondeterministic transitions between states because the actual successor state may deviate from the expected state due to perturbation~\cite{lutjens2020certified}. 
	We use the perturbation vector $\epsilon$ to describe the offset range.
	Then the target system for verification can be modeled as a tuple $M = (S, A, T_{a}^{\epsilon}, \pi, I)$, where $S$, $A$, $\pi$ and $I$ are the same as previously defined. Given the perturbation vector $\epsilon$, $T_{a}^{\epsilon}: S \rightarrow 2^{S}$ denotes all reachable states after applying $\pi(s)$ to the states $s$. Specifically, for the expected transition from $s^t$ to $s^{t+1}$, actual reachable states $T_{a}^{\epsilon}(s^t)=\{s | \forall i\in\{1,\ldots,n\}.|s_{i} - s^{t+1}_{i} | \leq \epsilon_{i}  \}$, where $n$ is the dimension of the state. Apparently, perturbation to concrete states may lead to state space exploration.

	\begin{figure}[tb]
		\centering
		\includegraphics[width=0.9\columnwidth]{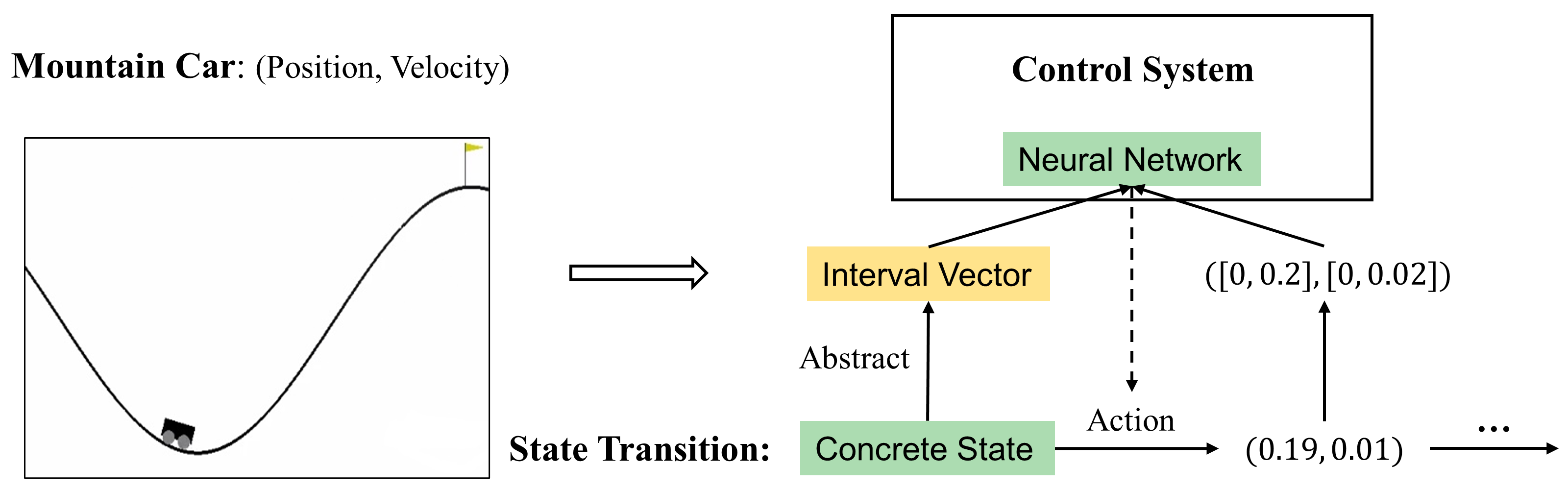}		
		\caption{Abstraction-based deep reinforcement learning}
		\vspace{-3mm}
		\label{framework}
	\end{figure}
	
	\vspace{-1mm}
	\section{Abstraction-Based Reinforcement Learning}
	\label{sec:tt}
	\vspace{-1mm}
	
	Figure~\ref{framework} shows the framework with an illustrative example. The state of the mountain car is a pair of position and velocity. We suppose a region where the position is in $[0,0.2]$ and the velocity is in $[0,0.02]$. Before a concrete state, e.g., $(0.19,0.01)$, is fed to the neural network, we transform it into the representation of its corresponding region, i.e., the interval vector $([0, 0.2], [0, 0.02])$, as the actual input. The neural network produces an action based on its current setting and the input. The action takes effect on the concrete state to drive the system under training to proceed. 
	
	The essential difference of our framework from classic DRL approaches is that the states fed into neural networks are \emph{abstract states}.
	An abstract state corresponds to an infinite set of concrete states, and is represented as a vector of intervals in our framework.  Thus, we call our learning approach abstraction-based reinforcement learning.

	\vspace{-1mm}
	
	\subsection{State Discretization and Abstraction}
	\vspace{-1mm}
	Our abstraction mechanism is based on the assumption that a trained controller usually adopts the same action for those concrete states that are adjacent~\cite{DBLP:conf/formats/Bacci020}. We consider a concrete state $s$ to be a vector of $n$ ($n\geq 1$) real numbers. The distance between two states can be measured by $L_p$ norms. 
	
	\begin{definition}[Adjacent states]
		Two states $s,s'$ are called adjacent with respect to an $L_p$-norm distance $\delta$, denoted by $s\approx s'$,  if and only if $\left\|s-s'\right\|_{p}\leq \delta$.  
	\end{definition}
	
	Given a state $s$ and an $L_p$-norm distance $\delta$, the set of all the adjacent states of $s$ is essentially an $L$-norm ball $B(s,\delta)=\{s'|\left\|s-s'\right\|_{p}\leq \delta\}$.

	
	Let $L_i,U_i$ be the lower and upper bounds for the $i$-th dimension element $s_i$ in $s$. Then the state space ${S}_n$ of the control system is $\Pi^n_{i=1}[L_i,U_i]$. 
	The basic idea of state discretization and  abstraction is to classify all adjacent states into a set, and represent the set as an abstract domain such as Polyhedra, Octagon, and Interval \cite{singh2017practical}. 
	
	In our abstraction approach, we choose Interval as the abstract domain for its simplicity and efficiency. 
	Specifically, we divide the interval $[L_i,U_i]$ of each dimension into a finite set of unit intervals. For each dimension, let $d_i\in \mathbb{R}$ ($0<d_i\leq U_i-L_i$) be the diameter of each unit interval, $d$ represent the vector of diameters for the $n$ dimensions. We call $d$ the \emph{abstraction granularity} of ${S}_n$, and use  ${\cal I}_i=[L_i,U_i]/{d_i}$ to represent the set of all the divided unit intervals. 
	Then, we obtain an abstract-state space ${\cal S}_{n,d}={\cal I}_1\times\ldots\times {\cal I}_n$, where 
	an abstract state $\mathbf{s}$ is essentially a vector of $n$ unit intervals $(I_1,I_2,\ldots,I_n)$. Apparently, ${\cal S}_{n,d}$ is finite. A concrete state $s$ belongs to the abstract state $\mathbf{s}$, denoted by $s\in \mathbf{s}$, if and only if $l_i\leq s_i \leq u_i$ for each $I_i=[l_i,u_i]$.  
	
	\begin{definition}[Interval-based abstraction]
		Given a state space ${S}_n$ and an abstraction granularity $d$, a state $s\in {S}_n$ is abstracted to be an interval vector $(I_1,I_2\ldots,I_n)\in {\cal S}_{n,d}$ where  $l_i\leq s_i \leq u_i$ for each $I_i=[l_i,u_i]$ with $i = 1,\ldots,n$. 
		
	\end{definition}

	\subsection{Learning on Abstract States}
	
	The abstraction-based reinforcement learning approach is orthogonal to most of the state-of-the-art DRL algorithms and can be smoothly implemented atop them. 
	We only need to insert an abstract transformer between 
	the control system and the neural network to transform 
	concrete states into abstract ones before feeding them to the neural network.

	\begin{wrapfigure}{R}{0.575\textwidth}
		\vspace{-4mm}
		\IncMargin{10pt}
		\begin{minipage}{0.575\textwidth}
			\setlength{\textfloatsep}{5pt}
			\begin{algorithm}[H]
				\caption{Abstraction-Based Deep Q-Learning} 
				\label{alg:TDQL}
				
				\For{episode = 1, $M$}{
					Initialize $s^1$ after resetting the Environment\\
					$\mathbf{s}^1 = \mathit{abstractionMapping}(s^1, d)$\\
					\For{t = 1, $T$}{
						Take action $a^t$ based on $Q$($\epsilon$-greedy)\\
						Execute $a^t$, then observe $r^t$ and $s^{t+1}$\\
						$\mathbf{s}^{t+1} = \mathit{abstractionMapping}(s^{t+1}, d)$\\
						Store $(\mathbf{s}^t, a^t, r^t, \mathbf{s}^{t+1})$ in Buffer\\
						Sample batch $(\mathbf{s}^i, a^i, r^i, \mathbf{s}^{i+1})$ from Buffer\\
						Update parameters based on \textit{Loss Function}\\
					}
				}		
			\end{algorithm}
		\end{minipage}
	\vspace{-2mm}
	\end{wrapfigure}
	
	We consider incorporating the operation to extend Deep Q-Learning (DQL)~\cite{DBLP:journals/corr/MnihKSGAWR13} as an illustrative example. Algorithm~\ref{alg:TDQL} depicts the main workflow, where \textit{abstractionMapping} is an abstraction function that maps concrete states to their corresponding abstract states and $d$ is the abstraction granularity. In our abstraction approach, 
	given a concrete state $s=(s_1,\ldots,s_n)$, 
	we first compute the unit interval $[l_i, u_i]$ according to the preset abstraction granularity $d$ such that $l_i\leq s_i\leq u_i$ with $i=1,\ldots,n$. 
	Then the interval vector $\mathbf{s}=([l_1, u_1],\ldots,[l_n,u_n])$ is fed into neural network. 
	It is worth mentioning that we need to double the input dimension of the neural network in order to accept the interval vector. We omit explanations of other steps as they are well-established in DQL. 
	
	We also applied the abstraction technique on Deep Deterministic Policy Gradient (DDPG)~\cite{DBLP:journals/corr/LillicrapHPHETS15} and Proximal Policy Optimization (PPO)~\cite{DBLP:journals/corr/SchulmanWDRK17} algorithms, then conducted experiments using the extended learning algorithm based on the open-sourced DRL library TF2RL~\cite{ota2020tf2rl}, where various DRL algorithms are implemented using TensorFlow 2.x.

	Abstraction plays a crucial role in our framework. Its granularity $d$ determines the performance of a trained network and the verification difficulty of the hosting system. The finer the abstraction is, the better performance a trained network is of, while the more costly the verification becomes due to state space explosion. 
	This assertion is confirmed by the experimental results in Section~\ref{sub:ag}. Therefore, it is important to determine an appropriate abstraction granularity to reach a trade-off between the performance and verification cost. We set $d$ as a hyperparameter in training algorithms, which means the adjustment to it depends on the corresponding training performance.
	
	\vspace{-1mm}
	\section{Abstraction-Based Formal Verification}
	\vspace{-1mm}
	\label{sec:sv}
	In this section, we propose an abstraction-based verification approach to model check the DRL systems trained on abstract domains. The basic idea of our approach is based on the Abstract Interpretation technique~\cite{cousot1977abstract}, which builds transition systems on finite abstract-state spaces by transforming concrete states into abstract ones for the purpose of model checking. Because the abstract state space is finite, its verification can be achieved by classic model-checking techniques~\cite{DBLP:journals/fac/Konnov19}.

	\subsection{Building Abstract-State Transition System}
	\label{subsec:abs}

	We abstract a continuous state space into a finite abstract-state space in the same way as we do in the training phase, and then build an abstract-state transition system by establishing the transition relations among abstract states according to the actions produced by the trained neural network.

	As mentioned in Section~\ref{sec:motivation}, a trained DRL system can be modeled as $M = (S, A, T_{a}^{\epsilon}, \pi, S^0)$ when perturbation is considered. Here, $\pi$ is a neural network that can be modeled as a black-box function $\pi:{\cal S}_{n,d}\rightarrow A$. 
	Let ${\cal S}_{n,d}^0\subseteq {\cal S}_{n,d}$ be a set of abstract states such that $\mathbf{s} \in {\cal S}_{n,d}^0$ if and only if there exists a state $s\in S^0$ such that $s\in \mathbf{s}$.

	Next we define the relation between abstract states. 
	Figure~\ref{fig:fsts} depicts the abstract transformer for abstract states. Given a abstract state $\mathbf{s}^0$, we can obtain a unique action $a$ by feeding it to the trained network. After applying $a$ to $\mathbf{s}^0$, we calculate the interval vector $V$ to cover the irregular state space generated by $\mathbf{s}^0$.
	 If the situation with perturbation is considered, $V$ can be smoothly expanded to $V^{'}$ to include extra reachable states. Then we use a set of abstract states to over approximate $V^{'}$. Let $[l'_i,u'_i]$ be the $i$-th interval in the vector $V^{'}$, then $[l'_i,u'_i]$ must be the sub-interval of either a unit interval or the concatenation of multiple unit intervals of $[l_i,u_i]$. 	Without loss of generality, we assume at least $m_i$ ($m_i\geq 1$) unit interval(s) is (are) needed to concatenate each other to cover $[l'_i,u'_i]$. So we need $\Pi^{n}_{i=1}m_i$ abstract states, whose union is the least over-approximation of the resulting vector. There is a transition relation from $\mathbf{s}^0$ to each abstract state in the union e.g., $\mathbf{s}^{1},\ldots,\mathbf{s}^{4}$ in the figure. 
	 
	 \begin{wrapfigure}{r}{0.45\textwidth}
	 	\setlength{\textfloatsep}{5pt}
	 	\vspace{-4mm}
	 	\centering
	 	\includegraphics[width=0.35\textwidth]{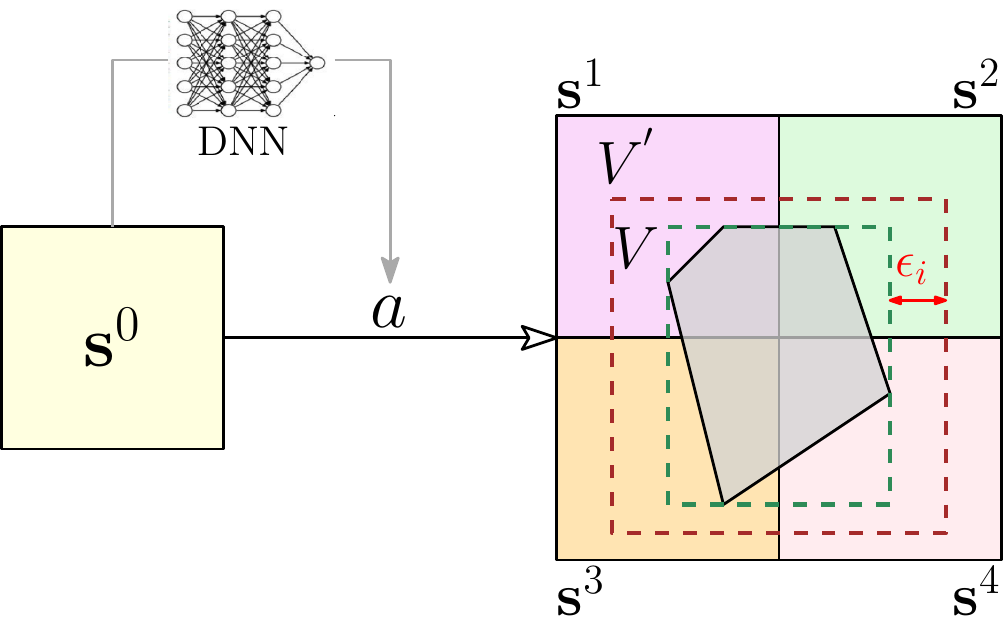}
	 	\caption{Transitions between abstract states}
	 	\label{fig:fsts}
	 	\vspace{-3mm}
	 \end{wrapfigure}

	Let us consider an example of the mountain car shown in Figure \ref{framework}. We assume that the current abstract state is $[0, 0.2] \times [0, 0.02]$. The trained DNN takes the same action $a$ on all the concrete states that are represented by the abstract state.  We assume that action is to accelerate the car to the right side. We calculate the maximal and minimal values on 2 dimensions based on the system dynamics. Then we construct an interval vector with them to represent all states transited from those in the preceding abstract state. We assume the vector is $[0.11, 0.39] \times [0.015, 0.025]$. It can be over-approximated by four abstract states, i.e., $[0, 0.2] \times [0, 0.02]$, $[0, 0.2] \times [0.02, 0.04]$, $[0.2, 0.4] \times [0, 0.02]$ and $[0.2, 0.4] \times [0.02, 0.04]$.

	\vspace{-1mm}
	\subsection{Model Checking of LTL Properties}
	\vspace{-1mm}
	
	\begin{wrapfigure}{R}{0.43\textwidth}
		\IncMargin{10pt}
		\vspace{-4mm}
		\begin{minipage}{0.43\textwidth}
			\setlength{\textfloatsep}{5pt}
			\begin{algorithm}[H]
				\caption{LTL Model Checking} 
				\label{alg:VbS}
				\KwInput{Initial abstract state $\mathbf{s}^0$, LTL formula $\phi$, threshold $T$, transition function $f$, Neural Network ${\cal N}$, perturbation $\epsilon$}
				\KwOutput{True, False} 
				$f$ = spot.formula.Not($\phi$)\\
				$A_f$ = spot.translate($f$)\\
				$K$ = spot.make\_kripke\_graph()\\
				$Q\leftarrow \{\mathbf{s}^0\}$\\
				$C = 1$\\
				\While{$Q$ is not empty and $C < T$}{
					Fetch $\mathbf{s}$ from $Q$\\
					\For{$i=1,\ldots,n$}{
						$[l_i,u_i]\leftarrow g(f(\mathbf{s}, {\cal N}(\mathbf{s})),i)$\\
						$[l_i,u_i] \leftarrow  [l_i - \epsilon_i, u_i + \epsilon_i]$
					}
					$\{\mathbf{s}^1,\ldots,\mathbf{s}^m\}:=h([l_{1}, r_{1}],\ldots, [l_{n}, r_{n}])$
					
					\For{$j=1,\ldots,m$}{
						
						\If{$\mathbf{s}^j$ is not traversed}{
							add\_edge($K$, $\mathbf{s}\rightarrow \mathbf{s}^j$)\\
							Push $\mathbf{s}^j$ into $Q$\\
							$C = C + 1$
						}
					}
				}
				calculate\_satisfied\_propositions($K$)\\
				\Return spot.intersecting\_run($K$, $A_f$)
			\end{algorithm}
			\vspace{-6mm}
		\end{minipage}
	\end{wrapfigure}
	Since we can construct the explicit finite-state transition system, the verification work can be delivered to existing model-checking tools. This observation indicates that the abstraction is decoupled from the subsequent verification procedure, which means that our approach can benefit from any future improvement in model-checking techniques.

	In practice, we leverage Spot~\cite{duret2016spot} to complete the verification work. Algorithm~\ref{alg:VbS} describes the implementation details of our verification framework, where \textbf{Input} lists the settings that users need to provide and functions that start with "spot." can be called directly from Spot. 
	
	$A_f$ is the automata that corresponds the negative form of the LTL formula $\phi$ (Line 1-2), where we refer readers to~\cite{DBLP:journals/fac/Konnov19} for more details of LTL verification. We traverse the abstract states via breadth-first search to build the explicit transition system $K$, where successive abstract states are computed in the way explained in Section~\ref{subsec:abs}. Function $f$ (Line 9) takes an interval vector and the corresponding action returned by DNN, and returns the irregular state space which we will not compute explicitly. Instead, we directly obtain the set of abstract sets after applying functions $g$ and $h$ (mentioned in Section~\ref{sub:soundness}). Besides, threshold $T$ will force to terminate  the verification when the model checker cannot verify all reachable abstract states.
	
	Then, we calculate the propositions satisfied by each abstract state in $K$. Note that for guaranteeing the soundness of verification results, when judging whether the abstract state satisfies proposition $\varphi$, we believe that it satisfies $\varphi$ only if all concrete states in it satisfy $\varphi$. Finally, we call the method in Spot to construct the transition diagram generated by $A_f$ and $K$ to obtain the verification result.
	
	\vspace{-1mm}
	\subsection{Soundness of the Abstraction Transformer}
	\label{sub:soundness}
	\vspace{-1mm}
	
	We prove that the abstraction transformer is sound in that it preserves propositions. 
	Let $\mathbb V$ be the set of interval vectors of ${S}_n$.
	The abstract transformer is a function $T^\#:{\cal S}_{n,d}\rightarrow 2^{{\cal S}_{n,d}}$, which is a composition of $g: {\cal S}_{n,d} \rightarrow {\mathbb V}$ and $h:{\mathbb V} \rightarrow 2^{{\cal S}_{n,d}}$. 
	Intuitively, $g(\mathbf{s})$ denotes the vector $V$ of intervals after the action determined by the neural network is applied to ${\mathbf{s}}$, and $h(V)$ returns the least set of abstract states whose union is an over approximation of an interval vector $V$. Note that for the brevity of proof, we omit the expansion operation for perturbation without loss of validity.
	
	\begin{lemma}\label{lemma:1}
		Given a state $s$ in ${S}_n$, let $T(s)$ denote the successor state after an action is applied to $s$ and  $\alpha(s)$ be the abstract state of $s$. Then, $T(s)\in g(\alpha(s))$. 
	\end{lemma}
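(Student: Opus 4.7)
The plan is to unpack the definitions of $\alpha$, $g$, and $T$ and then argue by a simple containment chain. First I would observe that by the definition of the interval-based abstraction in Section 3.1, the concrete state $s$ belongs to its own abstract state: $s \in \alpha(s)$. Next, because the trained neural network accepts abstract states as input, every concrete state contained in $\alpha(s)$ is mapped to the \emph{same} action $a = \pi(\alpha(s))$; in particular $s$ itself is driven by this common action, so $T(s)$ is the image of $s$ under the deterministic dynamics $T_a$.

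Then I would spell out what $g(\alpha(s))$ is. From the construction in Section 4.1 (illustrated in Figure~\ref{fig:fsts}), $g$ returns the interval vector $V$ whose $i$-th component is $[\min_i,\max_i]$, where the extrema are taken over the set $\{T_a(s') : s' \in \alpha(s)\}$ of all successor states produced by applying the common action $a$ to concrete states in $\alpha(s)$. In other words, $V$ is the smallest axis-aligned interval hull containing that image set. Since $s \in \alpha(s)$, the point $T(s) = T_a(s)$ lies in the image set, and hence each of its coordinates lies between the corresponding $\min_i$ and $\max_i$. Therefore $T(s) \in V = g(\alpha(s))$, which is the claim.

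The only subtlety, and the step that requires the most care, is making rigorous the informal description of $g$ as covering "the irregular state space generated by $\mathbf{s}^0$." A clean proof should fix the convention that $g(\mathbf{s})$ is precisely the coordinate-wise interval hull of $\{T_{\pi(\mathbf{s})}(s') : s' \in \mathbf{s}\}$ (ignoring perturbation, as the lemma statement does), so that containment of any single successor is immediate. I would place this as a short definitional remark before the one-line containment argument, since everything else in the proof is then routine.
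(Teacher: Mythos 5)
Your proposal is correct and follows essentially the same route as the paper's appendix proof: fix the common action on $\alpha(s)$, note $s\in\alpha(s)$, and conclude coordinate-wise containment of $T(s)$ in $g(\alpha(s))$. The only discrepancy is definitional: you take $g(\mathbf{s})$ to be the exact interval hull of the successor image $\{T_a(s'):s'\in\mathbf{s}\}$, whereas the paper defines the $i$-th component of $g$ as $[l_i+\min\{f_i(s',a):s'\in\mathbf{s}\},\,u_i+\max\{f_i(s',a):s'\in\mathbf{s}\}]$ (box endpoints shifted by extremal increments), a slightly coarser box that still contains your hull, so the containment argument is unaffected.
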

	
	Lemma \ref{lemma:1} says that $g$ guarantees that after an action is applied to $\alpha(s)$, 
	it generates an interval vector that contains the successor state $T(s)$ caused by applying the same action to $s$. 
	
	\begin{lemma}\label{lemma:2}
		Given a vector $V$ and a state $s$ such that  $s\in V$, let  $\alpha(s)$ be the abstract state of $s$. 
		Then, $\alpha(s)\in h(V)$.  
	\end{lemma}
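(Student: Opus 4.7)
The plan is to unfold the definition of $h$ and exploit the fact that the concrete state space is partitioned (up to shared boundaries) by the unit‐interval abstract states. First I would restate what it means for $h(V)$ to be the least set of abstract states whose union over‐approximates $V$: concretely, $h(V)$ can be characterized as $\{\mathbf{s}' \in {\cal S}_{n,d} \mid \mathbf{s}' \cap V \neq \emptyset\}$, where intersection is taken on the underlying concrete states. The ``least'' qualifier is what makes this characterization work, since an abstract state that does not meet $V$ is never needed in a cover, while an abstract state that does meet $V$ is the unique cell of the discretization containing each of its points and so cannot be dropped.

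Next I would take the given state $s = (s_1,\ldots,s_n)$ with $s \in V$ and write $V = ([l'_1, u'_1], \ldots, [l'_n, u'_n])$, so that $l'_i \leq s_i \leq u'_i$ for every $i$. By the definition of the abstraction mapping from Section~3.1, $\alpha(s) = (I_1,\ldots,I_n)$ where each $I_i = [l_i, u_i] \in {\cal I}_i$ satisfies $l_i \leq s_i \leq u_i$. Combining these two facts, $s$ lies in the intersection of $\alpha(s)$ and $V$ simultaneously, so $\alpha(s) \cap V \neq \emptyset$. Applying the characterization from the previous paragraph yields $\alpha(s) \in h(V)$.

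The main obstacle, such as it is, is not a calculation but a definitional one: making precise what ``least set whose union is an over‐approximation'' means, and in particular ruling out a pathological reading in which some abstract state intersecting $V$ could be omitted by compensating with another. This is handled by the observation that each concrete point of the state space lies in a unique abstract cell (or on the boundary of finitely many), so any cover of $V$ must contain every abstract state that meets $V$; otherwise a point in the missing cell but inside $V$ would be uncovered. Once this characterization of $h(V)$ is established, the rest of the proof is a one‐line verification that $\alpha(s)$ meets $V$ at the witness point $s$.
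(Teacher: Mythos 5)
Your proposal rests on the same core idea as the paper's proof: the cell $\alpha(s)$ contains the witness point $s\in V$, and a least cover of $V$ cannot omit it. The paper runs this as a proof by contradiction directly from the definition of $h$ (namely, $h(V)$ is the set of grid cells tiling the smallest grid-aligned box $V'$ with $l_i'\leq l_i\leq u_i\leq u_i'$ and $d_i\mid(u_i'-l_i')$ in each coordinate), whereas you route it through an intermediate characterization $h(V)=\{\mathbf{s}'\in{\cal S}_{n,d}\mid \mathbf{s}'\cap V\neq\emptyset\}$. That characterization is not quite faithful to the definition: a cell that meets $V$ only along a shared face \emph{can} be dropped from the least cover, since the shared boundary points are already covered by the adjacent cell, so the set of cells intersecting $V$ is in general a strict superset of $h(V)$; correspondingly, your supporting claim that ``any cover of $V$ must contain every abstract state that meets $V$, otherwise a point in the missing cell but inside $V$ would be uncovered'' is false for exactly those boundary-only cells. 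This does not break the lemma in substance, because the one instance you actually need is the cell $\alpha(s)$ containing a genuine point of $V$, and in that case minimality does force inclusion except in the degenerate situation where $s$ lies simultaneously on a face between two cells and at an endpoint of $V$ (there the tie-breaking in $\alpha$ matters, and the paper's own proof silently assumes it away by writing strict inequalities where only non-strict ones follow). To make your argument airtight without changing its shape, drop the ``intersects $V$'' characterization and instead verify directly from $l_i\leq s_i\leq u_i$ and $a_i\leq s_i\leq a_i+d_i$ that the grid cell $[a_i,a_i+d_i]$ lies inside $[l_i',u_i']$ for each $i$, which is exactly the computation the paper's contrapositive performs.
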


	\begin{wrapfigure}{r}{0.45\textwidth}
		\vspace{-8mm}
		\setlength{\textfloatsep}{5pt}
		\begin{center}
			\includegraphics[width=0.35\textwidth]{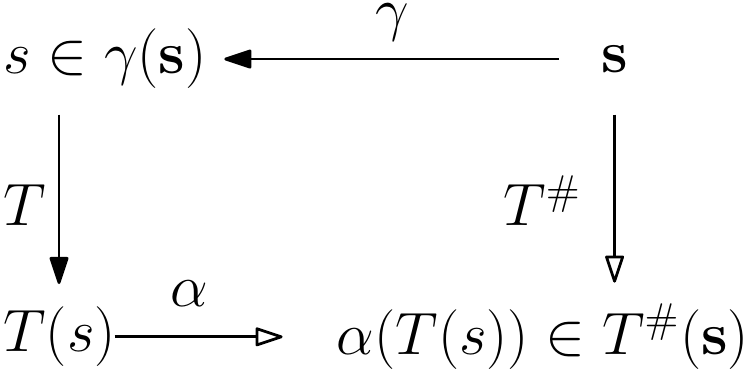}
	\caption{Soundness of abstract transformer}
		\end{center}
		\vspace{-5mm}
		\label{fig:sound}
	\end{wrapfigure}

	Lemma \ref{lemma:2} guarantees over-approximation of interval vectors. That is, for each state $s$ that is contained in $V$, the abstract state of $s$, i.e., $\alpha(s)$, must be in $h(V)$. The formal definitions of $g$ and $h$ and proofs are provided in the appendix as supplementary document.

	Figure \ref{fig:sound} graphically shows the soundness of the abstract transformer $T^\#$. It says that for any abstract state ${\mathbf{s}}$, the transitions from ${\mathbf{s}}$ to its successor abstract states in $T^\#({\mathbf{s}})$ cover all the transitions from the concrete states that ${\mathbf{s}}$ abstracts to their successor states that are caused by the same action. 
	
	\begin{theorem}[Soundness]
		For each ${\mathbf{s}} \in {\cal S}_{n,d}$, $\alpha(T(s))\in T^\#({\mathbf{s}})$ holds for all the states $s$ that ${\mathbf{s}}$ abstracts. 
	\end{theorem}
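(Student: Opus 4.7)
The plan is a direct two-step composition of Lemmas~\ref{lemma:1} and~\ref{lemma:2}, exploiting the fact that by construction $T^\#$ is the composite $h \circ g$. I would fix an arbitrary abstract state $\mathbf{s} \in \mathcal{S}_{n,d}$ together with an arbitrary concrete state $s$ that $\mathbf{s}$ abstracts (so $\alpha(s) = \mathbf{s}$), and then chase the membership $T(s) \in g(\mathbf{s})$ through $h$ to obtain $\alpha(T(s)) \in h(g(\mathbf{s})) = T^\#(\mathbf{s})$.

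First I would invoke Lemma~\ref{lemma:1} with the chosen $s$. Because $\alpha(s) = \mathbf{s}$, the lemma yields $T(s) \in g(\mathbf{s})$ directly. Then, setting $V := g(\mathbf{s})$ and noting that $T(s) \in V$, Lemma~\ref{lemma:2} gives $\alpha(T(s)) \in h(V) = h(g(\mathbf{s}))$. Unfolding the definition $T^\#(\mathbf{s}) = h(g(\mathbf{s}))$ finishes the argument, and since $s$ was an arbitrary concrete state abstracted by $\mathbf{s}$, the conclusion holds for every such $s$.

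The main subtlety — less an obstacle than a bookkeeping point that must be made explicit — is action-consistency between the concrete transition $s \mapsto T(s)$ and the abstract computation $g(\mathbf{s})$. In the abstraction-based framework the policy $\pi$ takes abstract states as input, so the action driving $T(s)$ and the action folded into $g(\mathbf{s})$ are both $\pi(\mathbf{s})$; this is what makes Lemma~\ref{lemma:1} applicable without ambiguity. A secondary remark is that the statement omits perturbation for brevity: once perturbation $\epsilon$ is reintroduced, one simply replaces $g$ by its expanded variant that sends $V$ to $V'$ by inflating each interval by $\epsilon_i$, and the identical two-step composition still applies because the widened vector $V'$ continues to contain every concrete successor under perturbation, so Lemma~\ref{lemma:2} fires with $V'$ in place of $V$.
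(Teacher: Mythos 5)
Your proposal is correct and follows essentially the same route as the paper's own proof: fix an arbitrary $s$ with $\alpha(s)=\mathbf{s}$, apply Lemma~\ref{lemma:1} to get $T(s)\in g(\mathbf{s})$, then apply Lemma~\ref{lemma:2} with $V=g(\mathbf{s})$ and unfold $T^\#=h\circ g$. Your added remarks on action-consistency and on reintroducing the perturbation expansion are consistent with the paper, which likewise notes that the perturbation widening is omitted without loss of validity.
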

	\begin{proof}
		The proof is straightforward with Lemmas \ref{lemma:1} and \ref{lemma:2}. Let $s$ be an arbitrary state that is abstracted by ${\mathbf{s}}$, i.e., $s\in \gamma({\mathbf{s}})$. 
		By Lemma \ref{lemma:1}, there is $T(s)\in g(\alpha(s))$. Let $s'=T(s)$ and $V=g(\alpha(s))$, 
		Because $s'\in V$, we have $\alpha(s')\in h(V)$ according to Lemma \ref{lemma:2}. 
		Namely, $\alpha(T(s))\in h(g(\alpha(s)))$. Because ${\mathbf{s}} = \alpha(s)$ and $T^\#=h \circ g$, 
		we conclude $\alpha(T(s))\in T^\#({\mathbf{s}})$.
	\end{proof}

	\vspace{-1mm}
	\section{Experimental Evaluation}
	\vspace{-1mm}
	\label{sec:experiment}
	
	We first study the impact of abstract granularity on the performance of trained systems by training a system under different abstract granularities and comparing their performance. Then, we demonstrate the effectiveness of our approach by showing that the systems trained in our approach have comparable performance with whose trained in classical DRL algorithms. Finally, we verify the trained systems against their desirable properties to show the efficiency of the verification.  
	
	\vspace{-1mm}
	\subsection{Benchmark and Experimental Settings}
	\vspace{-1mm}
	
	We choose three classic control problems from Gym~\cite{1606.01540}, including Pendulum, Mountain Car and Cartpole, and another adapted control task 4-Car Platoon~\cite{DBLP:conf/pldi/ZhuXMJ19}. 
	
	\begin{itemize}
		
		\item \textbf{Pendulum} It delineates a pendulum that can rotate around an endpoint. By starting from a random position, a pendulum is expected to swing up and stay upright. The expected property of Pendulum is that its angle must be always in the preset range.
		
		\item \textbf{Mountain Car} A car is positioned on a one-dimensional track between two mountains. It is expected to drive up the right mountain by first driving to the left one to get enough power via inertia after training. We need to guarantee that the car can finally reach the destination.
		
		\item \textbf{Cartpole} A pole is attached by an un-actuated joint to a cart, which moves along a frictionless track. The controller aims to keep the angle of the pole and the displacement of the cart within fixed thresholds, which must be guaranteed to satisfy.
		
		\item \textbf{4-Car Platoon} Four cars on the road are supposed to drive in a platoon behind each other. Each car aims to drive close to the front car so as to save fuel and reduce driving time. A straightforward safety requirement is that the four cars must never cause any collision.
		
	\end{itemize}
	
	\paragraph{Experimental settings} All experiments are conducted on a workstation running Ubuntu 18.04 with a 32-core AMD Ryzen Threadripper CPU @ 3.7GHz and 128GB RAM.

	\subsection{Impact of Abstraction Granularity}
	\label{sub:ag}

\begin{table}
		\vspace{-6mm}
			\footnotesize
		\caption{Values of abstraction granularity}
		\label{tab:ag}
		\centering
		
			\begin{tabular}{l|ccc}
				\hline
				\textbf{Controller} & \textbf{Fine} & \textbf{Intermediate} & \textbf{Coarse} \\
				\hline\hline
				Pendulum  & $[\{10^{-3}\} \times 3]$ & $[10^{-3}, 10^{-3}, 10^{-2}]$  & $[\{10^{-2}\} \times 3]$ \\
				
				Mountain Car & $[10^{-5}, 10^{-6}]$ & $[10^{-3}, 10^{-3}]$  & $[10^{-2}, 10^{-3}]$\\	
				
				CartPole  & $[\{10^{-3}\} \times 4]$ & $[10^{-2}, 10^{-2}, 10^{-3}, 10^{-2}]$ & $[\{10^{-2}\} \times 4]$\\
				
				4-Car platoon  & $[10^{-2}, \{10^{-3}, 10^{-2}\} \times 3]$  & $[\{10^{-2}\} \times 7]$ &$[\{20^{-2}\} \times 7]$\\
				\hline
		\end{tabular}
	\end{table}
	
	We trained the four systems in the abstraction-based approach. To evaluate the impact of abstraction granularity, we set three different abstraction granularity values for each system, and examine their performance. Table ~\ref{tab:ag} shows the values of the four systems. We use $\{x\} \times y$ to indicate that there are $y$ consecutive $x$s in the vector for convenience.  Smaller intervals imply finer abstract granularity. We classify the abstraction granularity into three levels, i.e., \textit{fine}, \textit{intermediate} and \textit{coarse}. 
	
	 To compute a precise result, we train each controller for 10 rounds and record its performance at corresponding steps. The performance of a system is measured by the average reward value based on 5 episodes. To make the comparison clear, we omit confidence intervals and only show the trend of mean rewards in the figure. 
	
	Figure \ref{fig:ag} shows the comparison results of the four trained systems. It can be seen that the performances of controllers in Pendulum are almost the same under the three different abstract granularity. It implies that even the coarse one is enough to train the system with a good performance. 
	However, the performance of MountainCar and Cartpole varies with abstract granularities.  Finer abstract granularity leads to a better performance. Note that the trajectories of  the 4-Car platoon's performance fluctuate heavily because the controller will receive a big negative reward when the cars collide according to the reward setting. 
	
The experiments also show that it is important to choose an appropriate abstraction granularity to achieve a trade-off between the performance of trained systems and the size of their abstract state spaces. One way to determine  is that we can start training a controller with the relatively coarse abstraction granularity, and refine it until the controller converges steadily to the optimal reward that is indicated by the controller trained with classic DRL algorithms.

	\subsection{Performance Comparison with Classical DRL Algorithms}
	\label{sub:pe}

		\begin{figure}[tb]
		\centering
		\includegraphics[width=1.0\columnwidth]{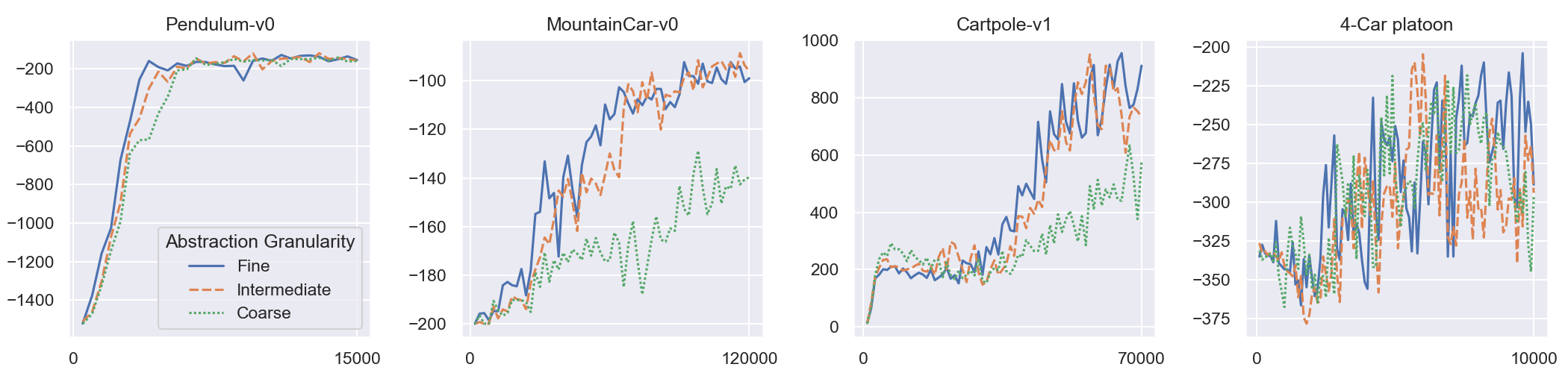}
		\vspace{-4mm}
		\caption{Performance comparison under different abstraction granularity}
		\vspace{-3mm}
		\label{fig:ag}
	\end{figure}

		\begin{wraptable}{r}{0.55\textwidth}
		\vspace{-5mm}
		\centering
		\setlength{\tabcolsep}{1pt}
		\footnotesize
		\caption{Basic settings of four systems for training}
		\label{tab:train-setting}
		\begin{tabular}{l|ccc}
			\hline
			\textbf{Controller} & \textbf{Network} & \textbf{Algorithm} &  \textbf{Granularity} \\
			\hline
			
			Pendulum  & $400\times300$ & DDPG  & $[\{10^{-3}\} \times 3]$\\
			
			Mountain Car & $32\times32$ & DQL  & $[10^{-5}, 10^{-6}]$\\	
			CartPole & $32\times32$ & DQL   & $[\{10^{-3}\} \times 4]$ \\
			4-Car platoon  & $500\times400$ & DDPG  & $[10^{-2}, \{10^{-3}, 10^{-2}\} \times 3]$  \\
			
			\hline
		\end{tabular}
\vspace{-1mm}
	\end{wraptable}

	We compare the performance of our training approach with classical DRL algorithms. 
We train each control system using a classic DRL algorithm and its corresponding extension with our abstraction technique, respectively. The main training settings can be found in Table~\ref{tab:train-setting}. For those remaining adjustable hyperparameters, we use the default values in the TF2RL framework~\cite{ota2020tf2rl}.

	\begin{figure}[t]
		\centering
		\includegraphics[width=1.0\columnwidth]{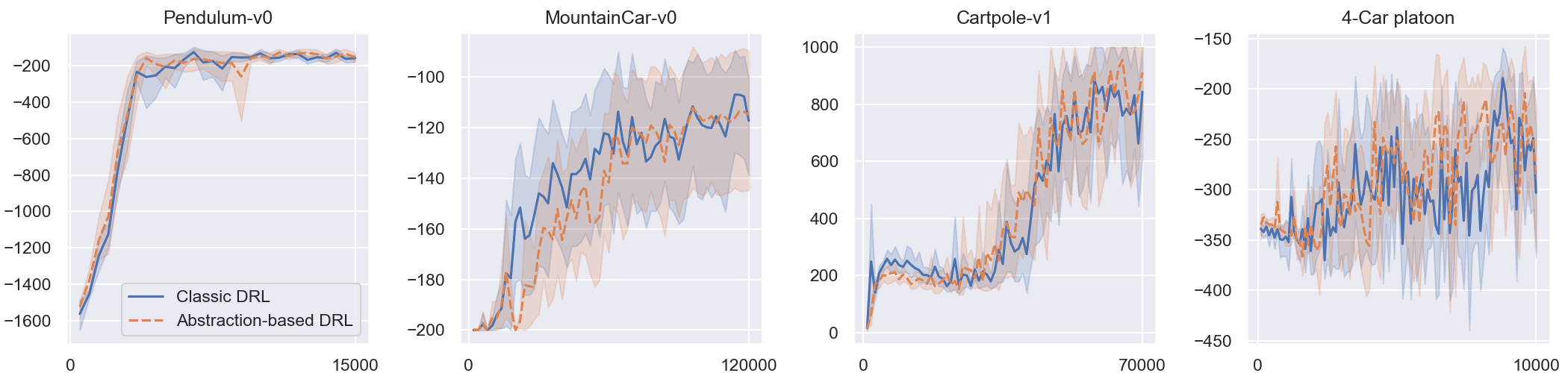}
		\vspace{-2mm}
		\caption{Performance comparison of the four systems with and without abstraction}
		\vspace{-2mm}
		\label{fig:comparison}
	\end{figure}

	Figure~\ref{fig:comparison} depicts the trend of four controllers' performance as the training proceeds under different training frameworks. The blue line indicates the mean rewards of a controller trained with the classic DRL algorithm. The light blue area shows the corresponding confidence intervals. The orange dashed line and area represent the performance of the controllers that are trained with the same DRL algorithms but extended with our abstraction approach. It can be observed that the trends of mean rewards are similar in all the four cases.   
	Although in Mountain Car and 4-Car platoon, there is a performance gap during the training process, the controller trained with abstraction can achieve the optimal reward eventually. 
	Thus, the controllers trained by the abstraction technique can retain comparable performance against those trained without abstraction.
	
	\subsection{Verification Analysis}
	
	In this section, we model check the four controllers that are trained in our abstraction-based approach and present the verification results. Table~\ref{tab:vr} shows the experimental data, where $\epsilon$ represents the perturbation vector,  \textbf{All} indicates whether the abstract state space is completely verified, \textbf{States} means the number of traversed abstract states, \textbf{Verified} indicates whether the property is verified true $(\checkmark)$ or false (\ding{55}), and \textbf{Time} denotes the time cost in second. We preset a threshold of the number of traversed abstract states $1.0 \times 10^7$ to force the verification to terminate.
	\vspace{-2mm}
	
	\paragraph{Pendulum} 
	One property of Pendulum is that the pole's angle must always be in $\pm 90\si{\degree}$. We use a tuple $(\theta,\omega)$ to define a state of Pendulum, where $\theta$ and $\omega$ denote the pole's angle and angular velocity, respectively. Then the property can be defined by the LTL formula $G( |\theta(s)|\leq \frac{\pi}{2} )$, where $G$ is the global operator indicating that the proposition following $G$ must hold in all the reachable states. We assume the initial state space of the controller is $[0,0.01] \times [0,0.01]$. The property can be verified under different perturbations in several seconds. The other property is that the angular velocity must be greater than $0$ eventually if the angle is less than $0$, which is not verified. Due to the over-approximation of reachable concrete states, the set of successor abstract states of the specific one in the violated path not only contains the valid abstract state, but also include the one with angular velocity less than $0$, which means that not all paths satisfy the property. 
	
	\vspace{-2mm}
	\paragraph{Mountain Car}  We use $p(s)$ and $v(s)$ to represent the position and velocity of the state in Mountain Car. There are two properties to ensure that the car can eventually reach the destination. The first says that the speed of car must be greater than $0.02$ around position $0.2$, which is represented by the LTL formula $G(|p(s)-0.2| < 0.05 \rightarrow v(s) > 0.02)$. The other property is that the car can always reach the position $0.5$, which can be formulated as $F(p(s) \geq 0.5)$, where $F$ is the finally operator in LTL. The initial position of the car is set $[-0.5001, -0.5]$. Both the two properties can be verified to be true under different perturbations.

	\vspace{-2mm}
	\paragraph{Cartpole} 
	One property of Cartpole  is that the angle of the pole and the displacement of the cart should never exceed preset thresholds. We assume the thresholds are 2.4 and $\frac{\pi}{15}$, respectively. The property can be defined as $G(|\theta(s)| \leq 2.4 \land |p(s)| \leq \frac{\pi}{15})$, where $\theta(s)$ and $p(s)$ represent the angle and displacement respectively. It is partially verified to be true on $1.0 \times 10^{7}$ abstract states in 4.3 hours, where $1.0 \times 10^{7}$ is the threshold we set for the number of traversed abstract states.
	
	\vspace{-2mm}
	\paragraph{4-Car Platoon} One safety property of the system is that there must be no collisions between cars. 
	We use $d_{i}(s)$ to denote the distance between the $i$-th car and $(i+1)$-th car,  the property can be formulated as $G(d_{1}(s) >= 0\land d_{2}(s) >= 0 \land d_{3}(s) >= 0)$. Due to the large state space, the property is verified to be true on a partial number of states in nearly 6.0 hours. 
	
	\begin{table}
		\caption{Verification results of the four trained controllers}
		\centering
		\setlength{\tabcolsep}{1pt}
					\footnotesize 
			\begin{tabular}{l|c|clcccr}
				\toprule
				\textbf{Case} & \textbf{Initial State Space} & \textbf{Property} & \textbf{$\epsilon$} &  \textbf{All} & \textbf{States} & \textbf{Verified}& \textbf{Time}\\
				\hline \hline  
				
				\multirow{4}{*}{Pendulum} &   &   \multirow{3}{*}{$G(|\theta(s)| \leq \frac{\pi}{2})$}  &  $[0,0]$   & $\checkmark$ & $1.1\times10^{3}$ &  $\checkmark$  & 2 \\
				& \multirow{2}{*}{$[0,0.01]\times[0,0.01]$}  &  &   $[0,0.01]$  & $\checkmark$ & $1.2\times10^{3}$ &$\checkmark$& 2\\
				&  & & $[0,0.1]$ & $\checkmark$ & $1.0\times10^{4}$  &  $\checkmark$  & 16 \\
				\cline{3-8}
				&   &   $G(\theta(s) \leq 0 \rightarrow F(\omega(s) \geq 0))$  &  $[0,0]$   & $\checkmark$ & $1.1\times10^{3}$ &  \ding{55} & 2 \\
				
				\hline
				
				\multirow{6}{1.3cm}{Mountain Car}   & 	\multirow{6}{*}{$[-0.5001,-0.5]\times[0,0]$}  & \multirow{3}{3.1cm}{$G(|p(s)-0.2|<0.05\rightarrow v(s) > 0.02 )$}   &  $[0,0]$   & $\checkmark$ & $1.3\times10^6$ & $\checkmark$ & 1164 \\
				&    &    &  $[10^{-5},0]$  &  $\checkmark$  &  $1.7 \times 10^6$  &  $\checkmark$  &  1555 \\
				&    &    &  $[10^{-4},0]$  &  $\checkmark$  &  $3.9 \times 10^6$  &  $\checkmark$  &  3717\\
				\cline{3-8}
				&   & \multirow{3}{*}{$F(p(s) \geq 0.5)$} &  $[0,0]$  & $\checkmark$ & $1.3\times 10^6$ &  $\checkmark$  & 1154\\
				&  &  &  $[10^{-5},0]$   & $\checkmark$ &  $1.7 \times 10^6$  &  $\checkmark$  & 1542 \\
				&  &  &  $[10^{-4},0]$   & $\checkmark$ &  $3.9 \times 10^6$  &  $\checkmark$  & 3742  \\	
				
				\hline	
				
				\multirow{2}{*}{Cartpole}    & $[0,0.01] \times [0,0.01] \times$ & \multirow{2}{*}{$G(|\theta(s)| \leq 2.4\land |d(s)| \leq \frac{\pi}{15})$} &  $[\{0\} \times 4]$  & \ding{55} &   $1.0 \times 10^7$  &  $\checkmark$  & 14869 \\
				& $[0,0.01] \times [0,0.01]$ &   & $[\{10^{-3}\} \times 4]$ & \ding{55} &  $1.0 \times 10^7$  & $\checkmark$ & 15443\\
				
				\hline	
				
				\multirow{3}{1.3cm}{4-Car Platoon} & $[0,0.1]\times[0,0.01]\times[0,0.1] $ & $G(d_{1}(s) >= 0  $   & \multirow{3}{*}{ $[\{0\}\times7]$ } &  \multirow{3}{*}{\ding{55}}  & \multirow{3}{*}{ $1.0 \times 10^7$}  & \multirow{3}{*}{ $\checkmark$}  & \multirow{3}{*}{ 21562}  \\
				& $\times [0,0.01]\times[0,0.1] $   &  $\land d_{2}(s) >= 0$  &   &   &   &  &   \\
				& $\times[0,0.01]\times[0,0.1] $   &  $\land d_{3}(s) >= 0)$  &    &    &   &    &  \\

				\bottomrule

		\end{tabular}
		\label{tab:vr}
	\end{table}
	
	\vspace{-2mm}
	
	\paragraph{Efficiency and Scalability} The experimental results show that the time cost on verification mainly depends
	on the size of reachable abstract-state space. It is possible to verify a DRL system which has millions of abstract states in a few hours. For the systems that have larger abstract-state space, we can fine-tune the abstraction granularity to reduce the abstract-state space during the training phase and meanwhile we guarantee that all the desired properties must be verified to be true under that granularity. 
	A case study in the supplementary document shows the feasibility of the approach. 
	 
	
		\vspace{-1mm}
	
	\section{Related Work}
	\label{sec:rw}
		\vspace{-1mm}
	

	Our abstraction-based verification approach is inspired by a bunch of recently emerging works on abstraction-based verification of neural networks~\cite{singh2019abstract,pulina2010abstraction,prabhakar2019abstraction}. These works have demonstrated the effectiveness of abstraction techniques on formal verification of neural networks. By contrast, there are two major differences in our abstraction approach. One is that we introduce abstraction in the training process, and the other is that the abstraction objects in our framework are system states, while the abstract objects in these approaches are neurons.

	To the best of our knowledge, existing verification approaches for DRL-enabled systems can be divided into three categories.  
	One is based on model transformation, which transforms the embedded DNN model into an interpretable model such as decision trees and programs~\cite{bastani2018verifiable,DBLP:conf/icml/VermaMSKC18}. 
	Another is to synthesize barrier functions that assist the DNN in decision making can ensure safety during deployment~\cite{DBLP:conf/pldi/ZhuXMJ19,xiong2021scalable}. 
	The last is to incorporate the DNN into the system dynamics~\cite{DBLP:conf/hybrid/IvanovWAPL19,tran2019safety}. 
	However, these approaches fail to simultaneously fulfill the three key functionalities that our approach achieves, i.e., the scalability that is impervious to the size of DNN, supporting more complex temporal  properties other than safety, and the capability of dealing with perturbations in verification.

	
	Runtime verification is another perspective of applying formal methods to DRL-enabled systems~\cite{DBLP:journals/jmlr/GarciaF15}. 
	For instance, runtime monitoring based on formal methods can guide agents to behave under predefined requirements~\cite{DBLP:conf/aiia/HasanbeigKA19,DBLP:conf/aaai/FultonP18}. 
	Agents can be prevented from unsafe behaviors by constructing the safety shield via formal verification~\cite{DBLP:journals/corr/abs-1708-08611,jansen2018shielded}. However, runtime approaches  inevitably incur extra system overheads during training and deployment.

	\vspace{-1mm}
	\section{Conclusion and Future Work}
	\label{sec:c}
		\vspace{-1mm}
		
	We have presented an abstraction technique for training and verifying DRL-enabled systems. By abstraction, continuous state space is discretized into a finite set of abstract states, on which DNNs in control systems are trained. By the same abstraction, we can model the trained DRL-enabled systems by finite state transition systems and resort to state-of-the-art model checking techniques to verify various system properties.  We  proved the soundness of the abstraction in verification and implemented a training and verification framework. We conducted experiments on four classical control problems. The experimental results demonstrated that the controllers trained with abstraction have comparable performance with those trained without abstraction. The four trained controllers were verified, which showed  the feasibility and efficiency of our verification approach. 
	
	We believe that learning on abstract domain would be a promising technique for training verifiable AI-enabled systems. It provides a flexible mechanism of achieving a balance between performance of trained systems and the size of abstract state space by fine-tuning abstraction granularity. Based on this work, it would be possible to integrate training and verification techniques. Guided by the counterexamples generated during verification, the learning process can be more strategic to produce well-trained DRL systems with formal guarantees. 

	\newpage

	\newpage 
	\appendix

\section*{Appendix}
	 The appendix consists of two sections. In Section A, we present an experiment of achieving the balance between the performance of trained systems and the size of their abstract-state space by adjusting its abstraction granularity during learning. The result can support our claim in our paper that \textit{by fine-tuning abstraction granularity we can reduce the abstract-state space and meanwhile guarantee that all the desired properties are verifiable}. Section B gives the proofs of the lemmas that are used to prove the soundness of our abstraction in verification.

    \section{Fine-tuning Abstraction Granularity}
	\label{sub:determine-d}
	
	In this section, we use Cartpole as a supplementary example for Section 5.4 to illustrate the feasibility of our approach for achieving a balance between the performance of trained systems and the size of their abstract-state space by fine-tuning the abstraction granularity. The approach makes the verification of trained systems amenable without losing their performance. 
	
	\subsection{Determining Abstraction Granularity Baseline}

	\begin{wrapfigure}{r}{0.5\textwidth}
	    \label{fig:cp}
		\vspace{-3mm}
		\setlength{\textfloatsep}{5pt}
		\centering
		\includegraphics[width=0.5\textwidth]{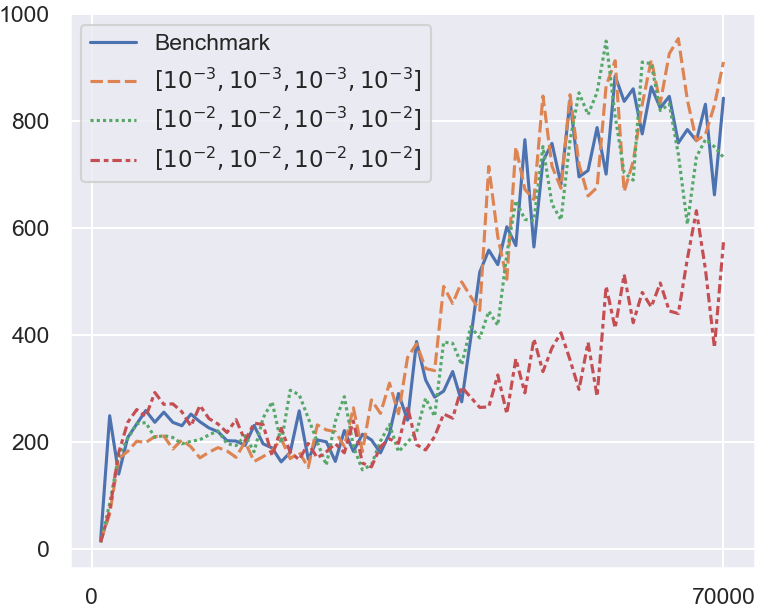}
		\vspace{-4mm}
	    \caption{Performance comparison of the trained controller under different abstraction granularities}
		\vspace{-4mm}
	\end{wrapfigure}
	
	In the example of Cartpole, a state consists of four elements, i.e., the displacement and the velocity of the cart and the angle and the angular velocity of the pole. Because states are continuous, the whole state space is infinite. The controller aims to keep the angle of the pole and the displacement of the cart within fixed thresholds.

	As we have discussed in our submitted paper, we can start training the example with the relatively coarse abstraction granularity, e.g., $[10^{-2}, 10^{-2}, 10^{-2}, 10^{-2}]$, but observed that there is a big gap in the performance when we compare the trained controller under this granularity with the one trained using classical DRL algorithm without any abstraction. We call the latter benchmark. 
	We fine-tune the granularity until the controller converges steadily to the rewards that are indicated by the benchmark controller. As shown in Figure 7, the performance of the intermediate abstraction granularity ($[10^{-2}, 10^{-2}, 10^{-3}, 10^{-2}]$) is close to the benchmark, and therefore we can choose it as the baseline of the abstraction granularity. 
	An abstraction granularity that is finer than the baseline basically guarantees that the controller's performance is close to the benchmark.

	\subsection{Model Checking under Different Abstraction Granularities}


	            
				
				

	
	The property for verification can be formally defined by the following LTL formula: 
	\begin{align}
	G(|p(s)| \leq 2.4 \land |\theta(s)| \leq \frac{\pi}{15}),\notag 
	\end{align} 
	where, $p(s)$ and $\theta(s)$ represent the displacement and the angle respectively. 
	There are three possible verification results, i.e., completely verified to be true ($\checkmark$), partially verified to be true ($-$), or 
	verified to be false (\ding{55}). Table~\ref{tab:ag} shows the verification results of the property under different abstraction granularities. 
	It can be seen that the property fails in the first three cases. The verification framework found  some abstract states where the property is violated before reaching the threshold. 
	Note that finding violations does not imply the trained controllers do not satisfy the property. 
	That is a common phenomena caused by introducing abstraction into verification.  We continued to refine the abstraction granularity to be $[10^{-3}, 10^{-3}, 10^{-3}, 10^{-3}]$. 
	The property is verified to be true on $1.0\times 10^6$ abstract states before it reaches the state-space threshold. This result is the same as one shown in the paper.

	To better illustrate how we can achieve a complete verification of a desired property, we slightly relaxed the safety property by increasing the safe range of the pole's angle by 0.1. That is, we replace $\frac{\pi}{15}$ (approximately $0.21$) in the property with $0.22$. 
	The relaxed safety property becomes:
		\begin{align}
	G(|p(s)| \leq 2.4 \land |\theta(s)| \leq 0.22).\notag 
	\end{align} 
	The right-hand part shows the verification results of the property under different granularities. 
	In the case of the coarse granularity, a violation to the property is found, indicating that the property may not be satisfied by the controller trained under this abstraction. When we fine-tuned the granularity to be $[10^{-3}, 10^{-2}, 10^{-3}, 10^{-2}]$, the relaxed property was completely verified within a reasonable time. Because this granularity is finer than the baseline and the property is verified, we can safely stop training with finer granularities in practice. 
	In this experiment, we continued to train the controller under two finer granularities for the purpose of comparison. It shows that the property is preserved as expected under finer granularities. In the case under $[10^{-3}, 10^{-3}, 10^{-3}, 10^{-3}]$, 
	the property was partially verified due to reaching the threshold of traversed abstract states. 
	
	

	\begin{table}
		\vspace{-6mm}
		\centering
		\caption{Verification results with different properties on the angle and abstraction granularities. (The state threshold of the number of traversed abstract states is $1.0 \times 10^6$, and time is in second.)}
	
		\setlength{\tabcolsep}{5pt}
			\begin{tabular}{cc|ccr|ccr}
\toprule

                 \multicolumn{2}{c|}{\multirow{2}{*}{\textbf{Granularity}}} & \multicolumn{3}{c|}{\textbf{$G(|p(s)| \leq 2.4 \land |\theta(s)| \leq \frac{\pi}{15})$}} & \multicolumn{3}{c}{\textbf{$G(|p(s)| \leq 2.4 \land |\theta(s)| \leq 0.22)$}} \\
                        \cline{3-8}
                        & & \textbf{Result} & \textbf{States} & \textbf{Time} & \textbf{Result} & \textbf{States} & \textbf{Time}   \\
\hline 
Coarse & $[10^{-2}, 10^{-2}, 10^{-3}, 10^{-2}]$ &  \ding{55} & $4.5\times10^{5}$ & 697  & \ding{55} & $4.5\times10^{5}$ &  710\\
\multirow{2}{*}{$\big\downarrow$} &$[10^{-3}, 10^{-2}, 10^{-3}, 10^{-2}]$  &   \ding{55} & $5.1\times10^{5}$ & 799   &  \checkmark & $5.1\times10^{5}$& 793\\
 & $[10^{-3}, 10^{-3}, 10^{-3}, 10^{-2}]$&  \ding{55} &$3.5\times10^{5}$ & 462  & \checkmark  & $3.5\times10^{5}$ & 456 \\
Fine & $[10^{-3}, 10^{-3}, 10^{-3}, 10^{-3}]$    &   - & $1.0\times10^{6}$ & 1361 &  - &  $1.0\times10^{6}$ & 1347 \\
 				\bottomrule
				
		\end{tabular}
		\label{tab:ag}

        \vspace{-1mm}
	\end{table}

	\section{Proof of Lemmas in Section 4.3}
	\label{sub:proof}

	We first formally define concretization function $\gamma$, abstraction function $\alpha$,  concrete transformer $T$,  and abstract transformer $T^\#$ in sequence.

	\begin{definition}[Concretization function]
		A concrete function $\gamma: {\cal S}_{n,d}\rightarrow 2^{{S}_n}$ maps abstract states into sets of concrete system states , where
		$\gamma((I_1,\ldots,I_n))=\{s : s_i\in [l_i,u_i], \forall i \in [1, n] \}$.
	\end{definition}
	
	\begin{definition}[Abstraction function]
		An abstraction function $\alpha: S_n \rightarrow {\cal S}_{n,d}$ is the mapping from concrete domain to abstraction domain such that for all $s\in {S}_n$ there is $\alpha(s)=(I_1,\ldots,I_n) \in {\cal S}_{n,d}$ and $s_i\in [l_i,u_i]$ for each $I_i = [l_i, u_i]$.
	\end{definition}
	
	\begin{definition}[Concrete transformer]
		A concrete transformer $T: {S}_n \rightarrow {S}_n$ defines the state transition function based on the system dynamics.
	\end{definition}
	
	\begin{definition}[Abstract transformer]
		An abstract transformer $T^\#: {\cal S}_{n,d} \rightarrow 2^{{\cal S}_{n,d}} $ defines transitions between abstract states, which is a composition of an interval transition function $g: {\cal S}_{n,d} \rightarrow {\mathbb V}$ and an over-approximation function $h:{\mathbb V}\rightarrow 2^{{\cal S}_{n,d}}$. 
	\end{definition}
	
	Intuitively, $g({\mathbf{s}})$ denotes the interval vector $V$ after the action determined by the neural network is applied to ${\mathbf{s}}$, and $h(V)$ returns the least set of abstract states whose union is an over approximation of the interval vector $V$.
	
	We use the tuple $(c_1,\ldots,c_n)$ to denote values of the concrete state $s$ and $f_i(s, a)$ to denote the variation in each variable based on the system dynamics. Specifically, given the concrete state $s$ and the adopted action $a$, the next state $s^{'}$ of $s$ can be represented by $(c_1 + f_1(s,a),\ldots,c_n + f_n(s,a))$.

	\begin{definition}[Interval transition function]
		An interval transition function $g: {\cal S}_{n,d} \rightarrow \mathbb V$ returns an interval vector $V$ for a abstract state ${\mathbf{s}}$. Let ${\mathbf{s}} = (I_1,\ldots,I_n)$ and $V = (I_1^{'},\ldots,I_n^{'})$. Then for each $I_i^{'} = [l_i^{'}, u_i^{'}]$, $l_i^{'} = l_i + min\{f_i(s,a): s \in {\mathbf{s}}\}$ and $u_i^{'} = u_i + max\{f_i(s,a): s \in {\mathbf{s}}\}$. 
	\end{definition}
	
	\begin{definition}[Over-approximation function]
		An over-approximation function $h:{\mathbb V}\rightarrow 2^{{\cal S}_{n,d}}$ returns a set of abstract states for a given interval vector $V$. Let the interval vector $V^{'}$ denote the range of the union of abstract states. Then $\forall i \in [1, n]$, $l_i^{'} \leq l_i \leq u_i \leq u_i^{'}$ and $d_{i} | (u_i^{'} - l_i^{'})$. Let $m_i = (u_i^{'} - l_i^{'})/{d_{i}}$, there are $\Pi^{n}_{i=1}m_i$ abstract states in the set.
	\end{definition}

	\begin{proof}[Lemma 1]
		Let the state $s = (c_1,\ldots,c_n)$. Then $T(s) = (c_1 + f_1(s, a),\ldots,c_n + f_n(s,a)) = (c_1^{'},\ldots,c_n^{'})$. $\alpha(s)$ is the abstract state that can be represented by $(I_1,\ldots,I_n)$, where $c_i \in [l_i, u_i]$ for each $I_i = [l_i, u_i]$. $g(\alpha(s))$ is the interval vector $(I_1^{'},\ldots,I_n^{'})$. Since the actions adopted for states belonging to the same abstract state are same, for each $I_i^{'} = [l_i^{'}, u_i^{'}]$, we have $l_i^{'} = l_i + min\{f_i(s^{'},a): s^{'} \in \alpha(s)\}$ and $u_i^{'} = u_i + max\{f_i(s^{'},a): s^{'} \in \alpha(s)\}$. Intuitively, $\forall i \in [1, n]$, $l_i \leq c_i \leq u_i$ and $min\{f_i(s^{'},a): s^{'} \in \alpha(s)\} \leq f_i(s, a) \leq  max\{f_i(s^{'},a): s^{'} \in \alpha(s)\}$, so we have $l_i^{'} \leq c_i + f_i(s, a) \leq u_i^{'}$, i.e., $l_i^{'} \leq c_i^{'} \leq u_i^{'}$. So $T(s) \in g(\alpha(s))$.  
	\end{proof}
	
	We prove the Lemma 2 via proof by contradiction. That is, we assume $\alpha(s) \notin h(V)$ and deduce the contradiction.
	
	\begin{proof}[Lemma 2]
		Let the state $s=(c_1,\ldots,c_n)$, the vector $V = (I_1,\ldots,I_n)$ and $h(V) = (I_1^{'},\ldots,I_n^{'})$. According to the definition of $h$, $\forall i \in [1, n]$, we have $l_i^{'} \leq l_i$ and $u_i \leq u_i^{'}$. Let $\alpha(s) = (I_1^{s},\ldots,I_n^{s})$. Since $\alpha(s) \notin h(V)$, we have $u_i^{s} < l_i^{'}$ or $l_i^{s} > u_i^{'}$ for each $I_i^{s} = [l_i^{s}, u_i^{s}]$. However, we have $l_i \leq c_i \leq r_i$ for each $i \in [1, n]$, therefore $c_i \notin [l_i^{s}, r_i^{s}]$, which contradicts the definition of $\alpha$. 
	\end{proof}

	\end{document}